\newcommand{\figonespace}{2.8cm}
\newcommand{\bu}{\mathbf{u}}
\newcommand{\bx}{\mathbf{x}}
\newcommand{\bc}{\mathbf{c}}
\newcommand{\eps}{\varepsilon}
\newcommand{\R}{\mathbb{R}}
\newcommand{\boldstart}[1]{\vspace{0.06in}\noindent{\bf #1}}
\newtheorem{theorem}{Theorem}
\newcommand{\bbjparams}{\Theta}               
\newcommand{\bjparams}{\boldsymbol{\theta}}   
\newcommand{\anglesymbol}{\phi}               
\newcommand{\banglesymbol}{\boldsymbol{\phi}} 
\newcommand{\distfunc}{d}                     
\newcommand{\bbjparamc}{\text{C}}             
\newcommand{\bjparamc}{\bc}                   
\newcommand{\jparamc}{c}                      
\newcommand{\colorfamily}{\mathcal{C}}        
\newcommand{\channels}{K}                     
\begin{document}

\title{Field of Junctions: Extracting Boundary Structure at Low SNR}

\author{Dor Verbin and Todd Zickler\\
Harvard University\\
{\tt\small \{dorverbin,zickler\}@seas.harvard.edu}
}


\twocolumn[{%
\renewcommand\twocolumn[1][]{#1}%
\maketitle
\begin{center}
\centering
\captionsetup{type=figure}
\includegraphics[width=\linewidth]{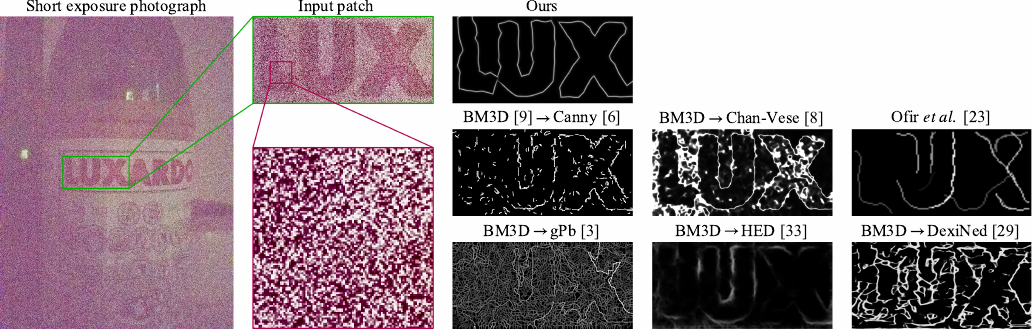}
\captionof{figure}{\label{figure:lux} Detecting boundaries at short exposure (1/5000s). The field of junctions extracts boundary structure at noise levels where other methods fail, even when the others are preceded by denoising and are optimally tuned for the image. Additionally, our model interprets its boundaries into component contours, corners, junctions, and regional colors (see Figure~\ref{figure:R}).}
\end{center}%
}]

\begin{abstract}
We introduce a bottom-up model for simultaneously finding many boundary elements in an image, including contours, corners and junctions. The model explains boundary shape in each small patch using a `generalized $M$-junction' comprising $M$ angles and a freely-moving vertex. Images are analyzed using non-convex optimization to cooperatively find $M+2$ junction values at every location, with spatial consistency being enforced by a novel regularizer that reduces curvature while preserving corners and junctions. The resulting `field of junctions' is simultaneously a contour detector, corner/junction detector, and boundary-aware smoothing of regional appearance. Notably, its unified analysis of contours, corners, junctions and uniform regions allows it to succeed at high noise levels, where other methods for segmentation and boundary detection fail.
\end{abstract}

\section{Introduction} \label{section:introduction}

Identifying  boundaries is fundamental to vision, and being able to do it from the bottom up is helpful because vision systems are not always familiar with the objects and scenes they encounter. The essence of boundaries is easy to articulate: They are predominantly smooth and curvilinear; they include a small but important set of zero-dimensional events like corners and junctions; and in between
boundaries, regional appearance is homogeneous in some sense. 

\begin{figure*}[t]
\small
\hspace{10mm}
\makebox[0pt]{Input} \hfill
\makebox[0pt]{gPb [3]} \hfill
\makebox[0pt]{$\text{L}_0$ [34], ASJ [36]} \hfill
\makebox[0pt]{BM3D [9]$\rightarrow$gPb} \hfill
\makebox[0pt]{BM3D$\rightarrow$$\text{L}_0$,ASJ}
\hspace{13mm}
$\overbracket[1pt][1.0mm]{\hspace{47mm}}_%
    {\substack{\vspace{-6.0mm}\\
\colorbox{white}{Ours}}}$
\vspace{-4.5mm}
\begin{center}
    \includegraphics[trim=0 0 0 10,clip,width=\linewidth]{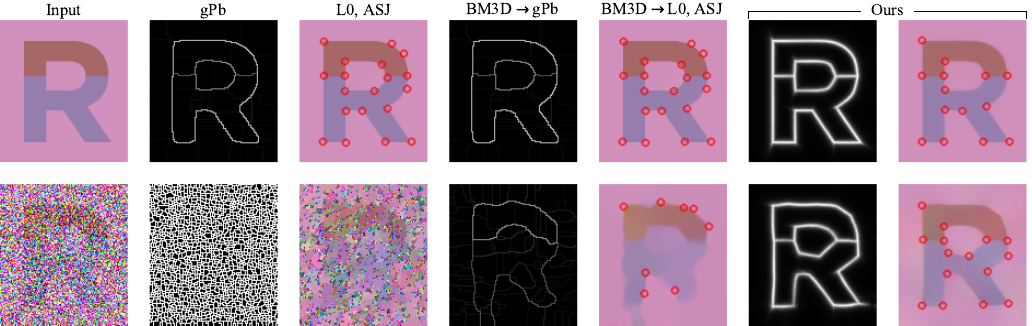}
\end{center}
\vspace*{-5mm}
\caption{Interpreting boundary structure at high and low SNR (top and bottom). The field of junctions identifies contours (column 6), corners/junctions (circles, column 7) and smooth colors (column 7). It is more resilient to noise than previous methods that are specific to contours, junctions or smoothing, even when they are preceded by optimally-tuned denoising.}\label{figure:R}
\end{figure*}

Yet, despite this succinct description, extracting boundaries that include all of these elements and exploit their interdependence has proven difficult. After decades of work on various subsets of contour detection, corner detection, junction detection, and segmentation, the community is still searching for comprehensive and reliable solutions. Even deep encoder-decoder CNNs, which can be tuned to exploit many kinds of local and non-local patterns in a dataset, struggle to localize boundaries with precision, motivating an ongoing search for architectural innovations like skip connections, gated convolutions, bilateral regularization, multi-scale supervision, kernel predictors, and so on.

We introduce a bottom-up model that precisely discerns complete boundary structure---contours, corners, and junctions---all at the same time (see Figures~\ref{figure:lux} \& \ref{figure:R}). It does this by fitting a non-linear representation to each small image patch, with $M+2$ values that explain the patch as being uniform or containing an edge, thin bar, corner, or junction of any degree up to $M$ (see Figure~\ref{figure:junctions}). The model encourages consistency between overlapping patches using a new form of spatial regularization that, instead of penalizing overall curve length or elastica, expresses preference for global boundary maps comprising isolated corners and junctions that are connected by contours with small curvature. As far as we know, this is the first time such regularization has been achieved in the presence of junctions.

An image is analyzed by solving a non-convex optimization problem that cooperatively determines $M+2$ junction values at every location. This produces a \emph{field of junctions}: a distilled representation of the contours, corners, junctions and homogeneous regions of an image. It is an intermediate representation that is useful for a variety of tasks, including contour detection, junction/keypoint detection, and boundary-aware smoothing. 

Experimentally, the field of junctions provides unprecedented resilience to noise. It is repeatable over a wide range of noise levels, including very high noise regimes where other approaches---whether based on denoising, segmentation, contour detection, or junction detection---all tend to fail (see Figures~\ref{figure:lux} \& \ref{figure:R}). We attribute this to the form of its regularization and to its unified representation of contours, corners, junctions and uniformity, which allows all of these signals to mutually excite and inhibit during analysis.

We introduce the field of junctions model in Section~\ref{section:model}, where we formulate analysis as a non-convex optimization problem. We describe how the model can be used for both single-channel and multi-channel images, and how it includes a parameter controlling the scale of its output. The following Section~\ref{section:analysis} is the heart of the paper: It introduces the optimization techniques that allow analysis to succeed. In particular, we present a greedy algorithm for initializing each patch's junction parameters that has convergence guarantees under certain conditions, and is very effective in practice even when they do not hold. In Section~\ref{section:experiments} we apply the field of junctions to contour, corner, and junction detection, showing that it provides novel regularization capabilities and repeatable performance across many noise levels. Extended versions of our figures, generalizations of the model, additional results, and a video summary of our paper, are all available in the supplement. 

\section{Related Work} \label{section:relatedwork}

\boldstart{Contour, corner and junction detection.} These have been studied for decades, often separately, using halved receptive fields to localize contours~\cite{canny1986computational,iverson1995logical,martin2004learning} and wedges or other patch-based models for corners and junctions~\cite{harris1988combined,rohr1992recognizing,deriche1993recovering,lowe2004distinctive,cazorla2003two,xia2014accurate,xue2017anisotropic}. The drawback of separating these processes is that, unlike our model, it does not exploit concurrency between contours, corners and junctions at detection time. 

\boldstart{Contour detection at low SNR.} The naive way to detect contours at low SNR is to precede a contour detector by a strong generic denoiser. Ofir~\etal~\cite{ofir2016fast,ofir2019detection} were perhaps the first to convincingly show that better results can be achieved by designing optimization strategies that specifically exploit the regularity of contours (also see Figure~\ref{figure:lux}). We build on this idea by developing different optimization schemes that handle a broader set of boundary structures and that improve upon~\cite{ofir2016fast,ofir2019detection} in both accuracy and scalability.

\boldstart{Curvature regularization.} Boundaries extracted at low SNR are strongly influenced by the choice of regularization. Prior work has shown that minimizing curvature---either alone or in combination with length (Euler's elastica)---generally does better at preserving elongated structures and fine details than minimizing length alone; and  there have been many attempts to invent good numerical schemes for minimizing boundary curvature~\cite{schoenemann2012linear,nieuwenhuis2014efficient,zhu2012image,zhong2020minimizing,tai2011fast,he2019segmentation}. All of these methods lead to rounded corners, and more critically, they only apply to boundaries between two regions so provide no means for preserving junctions (see Figure~\ref{figure:regularization}). 
In contrast, our model preserves sharp corners and junctions while also reducing curvature along contours.

\boldstart{Segmentation.} Our patch model is inspired by the level-set method of Chan and Vese~\cite{chan2001active} and in particular its multi-phase generalizations~\cite{vese2002multiphase,hodneland2009four}. In fact, our descent strategy in Section~\ref{section:GD} can be interpreted as pursuing optimal level-set functions in each patch, with each patch's functions constrained to a continuous $(M+2)$-parameter family. Our experiments show that our regularized patch-wise approach obviates the needs for manual initialization and re-initializing during optimization, both of which have been frequent requirements in practice~\cite{chan2001active,vese2002multiphase,hodneland2009four,li2005level}. 

\boldstart{Boundary-aware smoothing.} When locating boundaries, our model infers the regional colors adjacent to each boundary point and so provides boundary-aware smoothing as a by-product. It is not competitive with the efficiency of dedicated smoothers~\cite{xu2011image,paris2009fast,gastal2011domain} but is more resilient to noise.

\boldstart{Deep encoder/decoder networks}. Our approach is very different from relying on deep CNNs to infer the locations of boundaries~(\eg,~\cite{xie2015holistically,soria2020dexined}) or lines and junctions~\cite{huang2018learning,zhou2019end,xue2019learning}. CNNs have an advantage of being trainable over large datasets, allowing both local and non-local patterns to be internalized and exploited for prediction; but there are ongoing challenges related to overcoming their internal spatial subsampling (which makes boundaries hard to localize) and their limited interpretability (which makes it hard to adapt to radically new situations). Unlike CNNs, the field of junctions model does not have capacity to maximally exploit the intricacies of a particular dataset or imaging modality. But it has the advantages of: not being subsampled; interpreting boundary structure into component contours, corners and junctions; applying to many noise levels and many single-channel or multi-channel 2D imaging modalities; and being controlled by just a few intuitive parameters.

\section{Field of Junctions}\label{section:model}

From a $K$-channel image $I\colon\Omega\rightarrow\R^{\channels}$ with 2D support $\Omega$, we extract dense, overlapping $R\times R$ spatial patches, denoted $\mathcal{I}_R = \{I_i(\bx)\}_{i=1}^N$. We also define a continuous family of patch-types,  $\mathcal{P}_R = \{\bu_{\bjparams}(\bx)\}$, parametrized by $\bjparams$, describing the boundary structure in an $R\times R$ patch. For $\mathcal{P}_R$ we use the family of \emph{generalized $M$-junctions}, comprising $M$ angular wedges around a vertex. The parameters $\bjparams = (\banglesymbol, \bx^{(0)}) \in \R^{M+2}$ are $M$ angles $\banglesymbol = (\anglesymbol^{(1)}, ..., \anglesymbol^{(M)})$ and vertex position $\bx^{(0)} = (x^{(0)}, y^{(0)})$. Importantly, the vertex can be inside or outside of the patch, and wedges may have size $0$. Figure~\ref{figure:junctions} shows examples for $M=3$.

Assume all image patches $\mathcal{I}_R$ are described by patches from $\mathcal{P}_R$ with additive white Gaussian noise. This means that for every $i \in \{1, ..., N\}$ there exist parameters $\bjparams_i$, and $M$ color functions $\jparamc_i^{(1)}, ..., \jparamc_i^{(M)}\colon\Omega_i \rightarrow \R^{\channels}$ (to be defined momentarily), such that for all $\bx \in \Omega_i$:
\begin{equation} \label{equation:forwardmodel}
I_i(\bx) = \sum_{j=1}^M  u^{(j)}_{\bjparams_i}(\bx)\jparamc_i^{(j)}(\bx) + n_i(\bx),
\end{equation}
where $n_i(\bx) \sim \mathcal{N}(0, \sigma^2)$ is noise, and $u^{(j)}_{\bjparams_i}\colon \Omega_i\rightarrow\{0, 1\}$ is an indicator function that returns $1$ if $\bx$ is inside the $j$th wedge defined by $\bjparams_i$ and $0$ otherwise.

Each color function $\jparamc_i^{(j)}$ is defined over the support of the $i$th patch $\Omega_i$ and explains the continuous field of $K$-channel values within the $j$th wedge of that patch. These functions are constrained to a pre-chosen family of functions $\colorfamily$, such as constant functions $\colorfamily = \{\jparamc(\bx) \equiv c \colon c \in \R^{\channels}\}$ or linear functions  $\colorfamily = \{\jparamc(\bx) = A \bx + b \colon A\in \R^{\channels\times 2}, b\in \R^{\channels}\}$.

\begin{figure}[t]
\begin{center}
    \includegraphics[width=\linewidth]{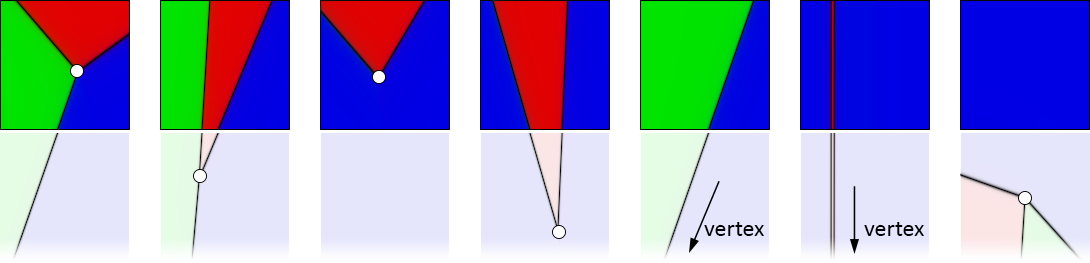}
\end{center}
\vspace*{-5mm}
\caption{A generalized $M$-junction comprises a vertex and $M$ angles, partitioning each patch into at most $M$ uniform regions (here, $M=3$). By freeing the vertex to be variously inside or outside of patches as needed, the model simultaneously accommodates contours, lines, corners, junctions, and uniform regions, thereby allowing concurrencies between all of them to be exploited during analysis.}\label{figure:junctions}
\end{figure}

We write the process of analyzing an image into its field of junctions as solving the optimization problem:
\begin{equation} \label{equation:MAP}
    \underset{\bbjparams, \bbjparamc}{\max} \log p(\bbjparams) + \log p(\bbjparamc) + \sum_{i=1}^N \log p(I_i | \bjparams_i, \bjparamc_i),
\end{equation}
where $p(\bbjparams)$ and $p(\bbjparamc)$ are spatial consistency terms over all junction parameters $\bbjparams = (\bjparams_1, ..., \bjparams_N)$ and color functions $\bbjparamc = (\bjparamc_1, ..., \bjparamc_N)$ respectively, and $p(I_i | \bjparams_i, \bjparamc_i)$ is the likelihood of a patch $I_i$ given the junction parameters $\bjparams_i$ and color functions $\bjparamc_i=(\jparamc_i^{(1)}, ..., \jparamc_i^{(M)})$. If the consistency terms $p(\bbjparams)$ and $p(\bbjparamc)$ are $0$ whenever overlapping patches disagree within their overlap, this objective is precisely the MAP estimate of the field of junctions, where the consistency terms are interpreted as priors over junction parameters and color functions, which we model as independent.

In the remainder of this section we provide more information about the three terms in Equation~\ref{equation:MAP}. For simplicity we use $M=3$ and a constant color model $c^{(j)}_i(\bx)\equiv c^{(j)}_i$, but expansions to higher-order color models and to $M>3$ are trivial and described in the supplement. The supplement also shows how the model performs when noise is not spatially-independent as is assumed in Equation~\ref{equation:forwardmodel}.

\subsection{Patch Likelihood} \label{section:dataterm}

For a single patch,  Equation~\ref{equation:forwardmodel} directly shows that the log-likelihood term is negatively proportional to the mean squared error in that patch:
\begin{equation} \label{equation:MSE2}
    \hskip-32px  
    \log p(I_i|\bjparams_i, \bjparamc_i) = -\alpha \sum_{j=1}^M \int u^{(j)}_{\bjparams_i}(\bx) \left\|\jparamc_i^{(j)} - I_i(\bx)\right\|^2 d\bx,
    \hskip-15px  
\end{equation}
where $\alpha > 0$ is a constant determined by the noise level $\sigma$.

The likelihood term in Equation~\ref{equation:MSE2} can be treated as a function of the junction parameters at a single location, $\bjparams_i$, because finding the optimal colors $\bjparamc_i$ is trivial for a given $\bjparams_i$ (see Equation~\ref{equation:colors} in Section~\ref{section:consistency}). However, despite the low dimensionality of the problem, which requires estimating an $(M+2)$-dimensional junction parameter per patch, solving it efficiently is a substantial challenge. We present an efficient solution to this problem in Section~\ref{section:CD}.

\begin{figure}[t]
\begin{center}
\smaller
\hspace{8.5mm}
\makebox[0pt]{Input} \hfill
\makebox[0pt]{$\ell_1$-Elastica [13]} \hfill
\makebox[0pt]{Ours, $\lambda_B=0.1$} \hfill
\makebox[0pt]{Ours, $\lambda_B=10$}
\hspace{9mm}

\includegraphics[trim=0 0 0 10,clip,width=\linewidth]{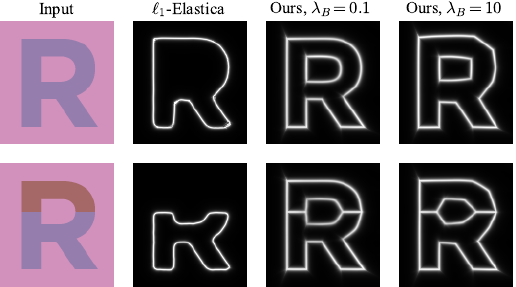}
\end{center}
\vspace*{-5mm}
\caption{Our boundary consistency term, governed by $\lambda_B$, favors isolated corners and junctions connected by contours with low curvature. Unlike other regularizers, it: is agnostic to contour length and convexity; preserves sharp corners; and preserves junctions between three or more regions.} \label{figure:regularization}
\end{figure}

\subsection{Spatial Consistency} \label{section:consistency}

Our spatial consistency terms $p(\bbjparams)$ and $p(\bbjparamc)$ require that all junction models agree within their overlap. The boundary consistency can be succinctly written as a constraint on the boundaries defined by each junction:
\begin{align} \label{equation:consistency01}
    \log p(\bbjparams) = \begin{cases} 0  &\text{ if } B_i(\bx) = \hat{B}(\bx) \text{ for all } $i$ \\ -\infty &\text{ otherwise}\end{cases},
\end{align}
where $B_i(\bx)$ is the \emph{boundary map} at the $i$th patch that returns $1$ if $\bx$ is a boundary location according to $\bjparams_i$ and $0$ otherwise, and $\hat{B}(\bx) = \max_{i \in \{1, ..., N\}} B_i(\bx)$ is the \emph{global boundary map} defined by the field of junctions.

The boundary consistency term in Equation~\ref{equation:consistency01} provides a hard constraint on the junction parameters, which is difficult to use in practice. We instead replace it with a relaxed, finite version having width $\delta$ and strength $\beta_B$:
\begin{equation} \label{equation:consistencysmooth}
    \log p(\bbjparams) = -\beta_B \sum_{i=1}^N \int \left[B^{(\delta)}_i(\bx) - \hat{B}^{(\delta)}_i(\bx)\right]^2 d\bx,
\end{equation}
where $B^{(\delta)}_i(\bx)$ is a smooth boundary map with dropoff width $\delta$ from the exact boundary position, to be defined precisely in Section~\ref{section:GD}. The relaxed global boundary map $\hat{B}^{(\delta)}(\bx)$ is now computed by taking the mean (rather than maximum) of the smooth local boundary map at each position $\bx$ over all patches containing it:
\begin{equation} \label{equation:edgemapsmooth}
    \hat{B}^{(\delta)}(\bx) = \frac{1}{|N_\bx|}\sum_{i \in N_\bx} B^{(\delta)}_i(\bx),
\end{equation}
where $N_\bx = \{i\colon \bx \in \Omega_i\}$ is the set of indices of patches that contain $\bx$. We denote by $\hat{B}^{(\delta)}_i(\bx)$ the $i$th patch of the relaxed global boundary map in Equation~\ref{equation:edgemapsmooth}. Note that the relaxed consistency in Equation~\ref{equation:consistencysmooth} approaches the strict one from Equation~\ref{equation:consistency01} when $\delta \rightarrow 0$ and $\beta_B \rightarrow \infty$.

Similar to the boundary spatial consistency term, we define the color spatial consistency term as:
\begin{equation} \label{equation:colorconsistencysmooth}
    \log p(\bbjparamc) = -\beta_C \sum_{i=1}^N \sum_{j=1}^M \int u_{\bjparams_i}^{(j)}(\bx) \left\|\jparamc_i^{(j)} - \hat{I}_i(\bx)\right\|^2 d\bx,
\end{equation}
where $\hat{I}_i(\bx)$ is the $i$th patch of the \emph{global color map}:
\begin{equation} \label{equation:colormapsmooth}
    \hat{I}(\bx) = \frac{1}{|N_\bx|}\sum_{i \in N_\bx}\sum_{j=1}^M u_{\bjparams_i}^{(j)}(\bx) \jparamc_i^{(j)}.
\end{equation}

Using the expressions for the log-likelihood and the relaxed consistency in  Equations~\ref{equation:MSE2},~\ref{equation:consistencysmooth}, and~\ref{equation:colorconsistencysmooth}, analyzing an image into its field of junctions can now be written as the solution to the following minimization problem:
\begin{align} \label{program:MAPsmooth}
    \underset{\bbjparams, \bbjparamc}{\min} &\sum_{i=1}^N \sum_{j=1}^M \int u^{(j)}_{\bjparams_i}(\bx) \left\|\jparamc_i^{(j)} - I_i(\bx) \right\|^2 d\bx  \nonumber\\
    &+\lambda_B \sum_{i=1}^N \int \left[B^{(\delta)}_i(\bx) - \hat{B}^{(\delta)}_i(\bx)\right]^2 d\bx, \\
    &+\lambda_C \sum_{i=1}^N \sum_{j=1}^M \int u_{\bjparams_i}^{(j)}(\bx) \left\|\jparamc_i^{(j)} - \hat{I}_i(\bx)\right\|^2 d\bx, \nonumber
\end{align}
where $\lambda_B = \beta_B / \alpha$ and $\lambda_C = \beta_C / \alpha$ are parameters controlling the strength of the boundary and color consistency.

We solve Problem~(\ref{program:MAPsmooth}) by alternation, updating junction parameters and colors $(\bbjparams ,\bbjparamc)$ while global maps $(\hat{B}^{(\delta)}, \hat{I})$ are fixed, and then updating the global maps. This takes advantage of closed-form expressions for the optimal colors. For the constant color model the expression is \begin{equation} \label{equation:colors}
    \jparamc_i^{(j)} = \frac{\int u^{(j)}_{\bjparams_i}(\bx) \left[I_i(\bx) + \lambda_C \hat{I}_i(\bx)\right]d\bx}{(1+\lambda_C)\int u^{(j)}_{\bjparams_i}(\bx)d\bx},
\end{equation}
and for piecewise-linear colors, \ie, $c_i^{(j)}(\bx) = A_i^{(j)}\bx + b_i^{(j)}$, there is a similar expression that replaces each of the $\channels$ divisions with a $3\times 3$ matrix inversion and multiplication.

Our formulation of boundary consistency encourages each patch $i$ to agree with its overlapping neighbors, by inhibiting its own boundariness $B_i^{(\delta)}(\bx)$ at pixels $\bx$ that are assigned a low score by their neighbors (as quantified by $\hat{B}^{(\delta)}(\bx)$) and exciting its boundariness at pixels assigned a high score. This means only salient junctions, corners, and contours end up contributing to the final global boundary map $\hat{B}^{(\delta)}(\bx)$. Junction values in uniform patches and other less salient patches tend to disagree with other patches, so spurious boundaries within them are suppressed. 

At the same time, our use of a smoothed version of consistency instead of a strict one allows for contours having nonzero curvature to be well approximated by local collections of corners that have slightly different vertices, while incurring a penalty. This has the effect of a curvature regularizer, because the only way for all junctions in the field to exactly agree is when the global boundary has zero curvature everywhere except at a finite number of vertices spaced at least $\ell_\infty$-distance $R$ apart (\eg a polygon).

The color consistency term of our objective promotes agreement on color between overlapping patches. It improves the results of the field of junctions under high noise by enforcing long-range consistency between the colors of sets of pixels not separated by a boundary.

\begin{figure}[t]
\begin{center}
    \includegraphics[width=\linewidth]{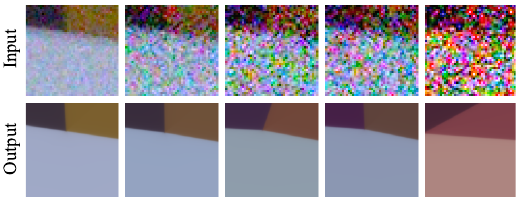}
\end{center}
\vspace*{-5mm}
\caption{Output of Algorithm~\ref{alg:greedy2} for a patch in SIDD~\cite{abdelhamed2018high} captured at decreasing light levels. Algorithm convergence is only guaranteed when noise is absent, but output is quite accurate in practice even when noise is high.} \label{figure:algorithm2}
\end{figure}

\section{Analysis}\label{section:analysis}

Analyzing an image into its field of junctions is a challenge, with Problem~(\ref{program:MAPsmooth}) consisting of $N$ junction-fitting problems that are coupled by spatial consistency terms. Even without consistency, finding the optimal junction for a single patch $i$ requires minimizing a non-smooth and non-convex function in $\bjparams_i$. 

We solve the problem in two parts: initialization and refinement. Both of these are key to our model's robustness to noise. The initialization procedure independently optimizes each patch, using a handful of coordinate updates to find discrete values for its angles and vertex location. Then, the refinement procedure performs gradient descent on a relaxation of Problem~(\ref{program:MAPsmooth}), cooperatively adjusting all junction parameters to find continuous angles and sub-pixel vertex locations that improve spatial consistency while maintaining fidelity to local appearance. We next describe each step.

\subsection{Initialization} \label{section:CD}

Many previous methods for junction estimation, such as~\cite{deriche1993recovering,cazorla2003two}, use gradient descent to optimize the vertex and angles of a single wedge model. These methods rely on having a good initialization from a human or a corner detector, and they fail when such initializations are unavailable. Indeed, even in the noiseless case, there always exists an initialization of a patch's junction parameters around which the negative log-likelihood is locally constant. 

In the present case, we need an initialization strategy that is automatic and reliable for \emph{every} patch, or at least the vast majority of them. We first describe an initialization algorithm for the simpler problem in which the vertex of a patch is known, where our algorithm guarantees optimality in the absence of noise; and then we expand it to solve for the vertex and angles together.

When the vertex is known, optimizing the parameters of one patch reduces to finding a piecewise-constant, one-dimensional angular function. There are algorithms for this based on dynamic programming~\cite{auger1989algorithms,jackson2005algorithm} and heuristic particle swarm optimization~\cite{bergerhoff2019algorithms}. We instead propose Algorithm~\ref{alg:greedy}, which is guaranteed to find the true junction angles $\banglesymbol = (\anglesymbol^{(1)}, ..., \anglesymbol^{(M)})$ that minimize the negative log-likelihood $\ell(\banglesymbol, \bx^{(0)}) = -\log p(I_i | \bjparams, \bjparamc_i)$ in the noiseless case. The algorithm consists of a single coordinate-descent update over the $M$ junction angles, that is, it minimizes $\ell_j(\anglesymbol) \overset{\Delta}{=}\nolinebreak \ell(\anglesymbol^{(1)}, ..., \anglesymbol^{(j-1)}, \anglesymbol, \anglesymbol^{(j+1)}, ..., \anglesymbol^{(M)}, x^{(0)}, y^{(0)})$ for $j = 1, ..., M$.

\begin{algorithm}
\SetAlgoLined
 Initialize $\anglesymbol^{(1)}, ..., \anglesymbol^{(M)} \leftarrow 0$.\\
\For{$j = 1, ..., M$}{
   $\anglesymbol^{(j)} \leftarrow \underset{\anglesymbol}{\text{argmin }}\ell_j(\anglesymbol)$
}
\caption{Optimization of angles}
\label{alg:greedy}
\end{algorithm}

\begin{figure*}
\begin{center}
\hspace{\figonespace}
\makebox[0pt]{Input}\hfill
\makebox[0pt]{Boundary-aware smoothing}\hfill
\makebox[0pt]{Boundaries}
\hspace{\figonespace}

\includegraphics[width=\linewidth]{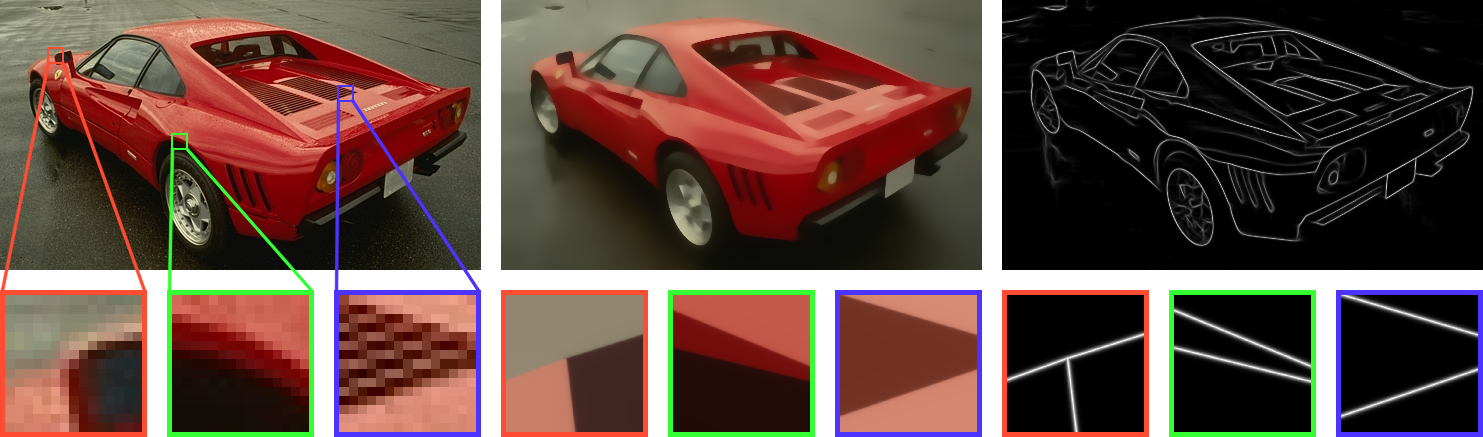}
\captionof{figure}{\label{figure:car} Field of junctions from a photograph. It can extract boundary-aware smoothing and boundary structure from natural images because it is robust to texture and other natural deviations from the ideal generalized $M$-junction model.}
\end{center}
\end{figure*}

\begin{theorem} \label{theorem:theorem}
For a junction image $I_i(\bx)$ with no noise (i.e., $n_i\equiv 0$ in Eq.~\ref{equation:forwardmodel}) and with vertex $\bx^{(0)}$ known, Algorithm~\ref{alg:greedy} is guaranteed to find the globally optimal angles $\banglesymbol$.
\end{theorem}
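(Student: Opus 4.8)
The plan is to show that each of the $M$ coordinate updates in Algorithm~\ref{alg:greedy} exactly "peels off" one of the true wedges, so that after $M$ steps all true angles have been recovered. The key structural fact is that in the noiseless case the patch $I_i$ is piecewise constant on the true angular partition around the known vertex $\bx^{(0)}$, so the negative log-likelihood $\ell$, viewed as a function of the ordered angles $\anglesymbol^{(1)} \le \cdots \le \anglesymbol^{(M)}$, is piecewise quadratic with breakpoints only at the true angle values; within each cell it measures the within-group sum of squared deviations of $I_i$ from the mean of each wedge, which is minimized (value zero) precisely when the guessed partition refines the true one, and in particular when the guessed angles coincide with a subset of the true angles. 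So first I would set up notation for the true angular intervals, observing that because wedges may have size $0$ the true angles are a multiset $\{\anglesymbol^{(1)}_\star, \dots, \anglesymbol^{(M)}_\star\}$ with possible repeats, and the achievable minimum of $\ell$ is $0$.

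Next I would analyze the first update. With all other angles held at their initialized value $0$, the one free angle $\anglesymbol^{(1)}$ sweeps through $[0, 2\pi)$; I would argue that $\ell_1(\anglesymbol)$ attains its global minimum $0$ at $\anglesymbol = \anglesymbol^{(k)}_\star$ for \emph{some} $k$ (whichever true angle, together with the $M-1$ copies of $0$ that may or may not be true angles, produces a partition refining the truth --- and since the region bounded by $0$ and the nearest true angle is part of a single true wedge, picking that true angle gives zero error). The argmin therefore lands on a true angle. I would then observe that the remaining problem is of the same type: we now have one angle pinned to a true value, and the remaining $M-1$ free angles (still at $0$) must carve up the rest; by induction on $M$, each subsequent update fixes a new true angle. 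Care is needed because the $0$-initialized angles in later steps act as "spurious" cut locations, but since cutting a true wedge into pieces and then recombining contributes zero extra error (each piece has the same constant color), these extra cuts never prevent $\ell$ from reaching $0$ and never pull the argmin off a true angle; I would make this precise by noting $\ell_j(\anglesymbol) = 0$ iff the full guessed partition refines the true one, which for fixed already-chosen angles is achieved exactly on the finite set of true angles.

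A subtlety to address is non-uniqueness of the argmin: if a true wedge is bounded by two true angles and the step-$j$ minimizer could be either, Algorithm~\ref{alg:greedy} picks one; I would argue that the claim "Algorithm finds the globally optimal angles" should be read as "finds a global minimizer of $\ell$," and that any tie-breaking choice still leaves $\ell = 0$ attainable in the remaining steps, so the final output has $\ell = 0$ and hence is globally optimal. I would also handle the degenerate case where the argmin set is a positive-measure arc (e.g. two adjacent true wedges share the same color, or $I_i$ is uniform): then \emph{any} choice in that arc is optimal and the induction still goes through, since merging/splitting wedges of equal color is free.

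The main obstacle I anticipate is the bookkeeping in the inductive step: carefully formalizing "the guessed partition (including the pinned angles and the yet-unoptimized $0$-angles) always admits a completion that refines the truth, and the step-$j$ argmin is exactly the set of $\anglesymbol$ for which this holds." Once that invariant is stated cleanly --- essentially, that after step $j$ the chosen angles $\anglesymbol^{(1)}, \dots, \anglesymbol^{(j)}$ are a sub-multiset of the true angles and $\ell$ restricted to the remaining coordinates still has minimum $0$ --- the rest is a short induction. The analytic content (that $\ell_j$ is piecewise quadratic and that within-group variance is zero iff the partition refines the piecewise-constant structure) is routine and can be stated without grinding through the integrals.
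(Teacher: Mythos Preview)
There is a genuine gap. Your central claim for the first update---that $\ell_1(\phi)$ attains its global minimum value $0$ at some true angle---is false in general. At step $j=1$ the other $M-1$ angles are all held at $0$; since these coincide, the resulting angular partition has only \emph{two} non-degenerate wedges, bounded by $0$ and by the free angle $\phi$. Whenever the true junction has three or more wedges with pairwise distinct colors, no two-wedge partition can refine the true one, so $\ell_1(\phi)>0$ for every $\phi\in[0,2\pi)$. Your parenthetical reasoning (that the arc from $0$ to the nearest true angle lies inside a single true wedge) establishes only that \emph{one} of the two guessed wedges has zero error, not that the total does. The later remark that the $0$-initialized angles act as harmless ``spurious cut locations'' is also backward: they are coincident rather than distinct, so they contribute no refining power at all.

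What is actually needed---and what the paper proves---is that the minimizer of $\ell_j$ lands on a true angle \emph{even though} the minimum value is strictly positive in the early steps. The paper's argument is analytic, not combinatorial: it computes $\tfrac{d}{d\phi}\ell_j(\phi)$ and $\tfrac{d^2}{d\phi^2}\ell_j(\phi)$ (using a Leibniz-rule lemma to handle the dependence of the optimal wedge colors on $\phi$) and shows that on any open interval between adjacent true angles every critical point of $\ell_j$ has strictly negative second derivative, hence is a local maximum. Since $\ell_j$ is continuous everywhere and smooth except possibly at the true angles, its global minimum over $[0,2\pi)$ must therefore fall on a true angle. Your induction scaffolding would go through once this lemma is available, but the lemma is the nontrivial content of the proof, and the ``$\ell=0$ iff the partition refines the truth'' criterion cannot substitute for it.
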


\begin{proof}[Proof Sketch]  (See full proof in supplement.)
First, note that $\ell_j(\anglesymbol)$ is continuous and smooth for all $\anglesymbol$ other than possibly a discontinuity in the derivative at any of the true junction angles. If the optimal $\anglesymbol$ is not one of the true junction angles then it must lie in the \emph{open} interval between two such angles, \ie $\anglesymbol \in (\anglesymbol^-, \anglesymbol^+)$. It can be shown that $\ell_j(\anglesymbol)$ does not have any local minima in $(\anglesymbol^-, \anglesymbol^+)$, and therefore for each angular interval between two true junction angles the cost function must be minimized at one of the endpoints. Therefore repeatedly minimizing $\ell_j(\anglesymbol)$ for $j = 1,..., M$ is guaranteed to provide a globally optimal set of angles.
\end{proof}

In practice, we find that Algorithm~\ref{alg:greedy} provides an excellent estimate of the true junction angles even when the input patch is noisy. It also has a significant efficiency advantage. Each coordinate update can be done to an arbitrarily small error $\eps$ with complexity $O(1/\eps)$, by exhaustively searching over all angles in increments of $\eps$. The complexity for a single junction is therefore $O(M/\eps)$, in contrast with the $O(1/\eps^2)$ dynamic programming solution of~\cite{jackson2005algorithm} and the $O(1/\eps^M)$ of naive exhaustive search over all possible $M$-angle sets. Moreover, each step of the algorithm can be run in parallel over all angles (and over all patches) by computing the value of $\ell_j(\anglesymbol^{(j)})$ for each of the $O(1/\eps)$ values and choosing the minimizing angle. Thus, runtime can be accelerated significantly using a GPU or multiple processors. 

These efficiency advantages become especially important when we expand the problem to optimize the vertex in addition to the angles. We simply do this by initializing the vertex at the center of the patch and updating it along with the angles using a coordinate descent procedure. See Algorithm~\ref{alg:greedy2}. Figure~\ref{figure:algorithm2} shows a typical example, where the algorithm results in a good estimate of the true vertex position and angles despite a substantial amount of noise.

\begin{algorithm}
\SetAlgoLined
 Initialize $x^{(0)}, y^{(0)}$ at the center of the patch.\\
\For{$i = 1, ..., N_{\text{init}}$}{
    Find angles $\banglesymbol$ using Algorithm~\ref{alg:greedy}.\\
    $x^{(0)} \leftarrow \underset{x}{\text{argmin }}\ell(\banglesymbol, x, y^{(0)})$\\
    $y^{(0)} \leftarrow \underset{y}{\text{argmin }}\ell(\banglesymbol, x^{(0)}, y)$\\
}
\caption{Optimization of angles and vertex}
\label{alg:greedy2}
\end{algorithm}

\subsection{Refinement} \label{section:GD}

After initializing each patch separately, we refine the field of junctions using continuous, gradient-based optimization. In order to compute the gradient of the objective in Problem~(\ref{program:MAPsmooth}) with respect to $\bbjparams$ we relax the indicator functions $\{\bu_{\bjparams}(\bx)\}$, making them smooth in $\bx$ and in $\bjparams$, similar to level-set methods~\cite{chan2001active,vese2002multiphase}. We do this by describing each $3$-junction using two distance functions (a similar parametrization exists using $M-1$ functions for $M$-junctions). Given the vertex position $(x^{(0)}, y^{(0)})$ and angles $\anglesymbol^{(1)}, \anglesymbol^{(2)}, \anglesymbol^{(3)}$, and assuming without loss of generality that $0 \leq \anglesymbol^{(1)} \leq \anglesymbol^{(2)} \leq \anglesymbol^{(3)} < 2\pi$, we define a junction using two signed distance functions $\distfunc_{12}$ and $\distfunc_{13}$ defined by:
\begin{equation} 
    \distfunc_{kl}(\bx) = \begin{cases}
    \min\{\distfunc_k(\bx), -\distfunc_l(\bx)\} \;\, \text{ if } \anglesymbol^{(l)} - \anglesymbol^{(k)} < \pi \\
    \max\{\distfunc_k(\bx), -\distfunc_l(\bx)\} \; \text{ otherwise} \end{cases}  
    \hskip-14px  
\end{equation}
where $\distfunc_l(x, y) = -(x-x^{(0)})\sin(\anglesymbol^{(l)}) + (y-y^{(0)})\cos(\anglesymbol^{(l)})$ is the signed distance function from a line with angle $\anglesymbol^{(l)}$ passing through $(x^{(0)}, y^{(0)})$.

Our relaxed indicator functions are defined as:
\begin{align}
    u_{\bjparams}^{(1)}(\bx) &= 1-H_{\eta}(\distfunc_{13}(\bx)), \nonumber \\
    u_{\bjparams}^{(2)}(\bx) &= H_{\eta}(\distfunc_{13}(\bx))\left[1-H_{\eta}(\distfunc_{12}(\bx))\right], \\
    u_{\bjparams}^{(3)}(\bx) &= H_{\eta}(\distfunc_{13}(\bx))H_{\eta}(\distfunc_{12}(\bx)), \nonumber
\end{align}
where $H_{\eta}$ is the regularized Heaviside function, as in~\cite{chan2001active}:
\begin{equation}
    H_{\eta}(d) = \frac{1}{2}\left[1 + \frac{2}{\pi}\arctan\left(\frac{d}{\eta}\right) \right].
\end{equation}
The smooth boundary maps for the consistency term are:
\begin{equation} \label{equation:edgemapsmoothparam}
    B_i^{(\delta)}(\bx) = \pi\delta\cdot H'_{\delta}\left(\min\{|d_{12}(\bx)|, |d_{13}(\bx)|\}\right),
\end{equation}
where $H'_{\delta}(d)$ is the derivative of $H_{\delta}(d)$ with respect to $d$, and the scaling factor ensures that $0 \leq B_i^{(\delta)}(\bx)\leq 1$. 

Our experiments use $\eta = 0.01$ and $\delta = 0.1$. We find that the algorithm is fairly insensitive to these values, and that varying them does not provide useful control of the model's behavior. This is in contrast to the other parameters---patch size $R$ and consistency weights $\lambda_B$, $\lambda_C$---that control scale and level of boundary and color detail.

\begin{figure}[t]
\begin{center}
    \includegraphics[width=\linewidth]{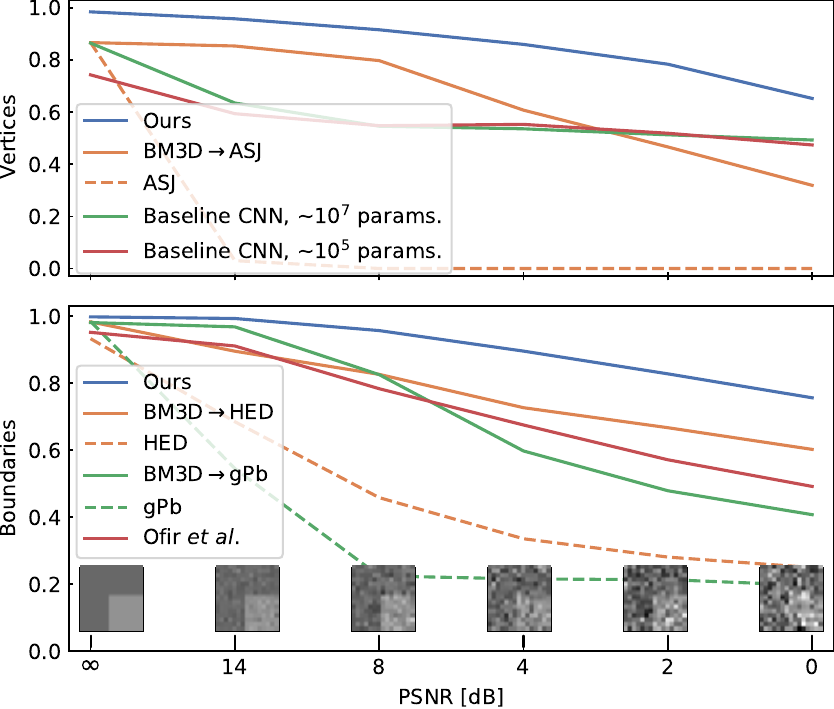}
\end{center}
\vspace*{-5mm}
\caption{Vertex and boundary detection F-score for increasing noise on our dataset. At low noise our model is comparable to existing edge and junction detectors and a baseline CNN, but it significantly outperforms them at high noise, even when preprocessed by BM3D. Insets: sample patch at different noise levels.} \label{figure:fmeasures}
\end{figure}

\subsection{Optimization Details}

We analyze an image into its field of junctions by first initializing with Algorithm~\ref{alg:greedy2} for $N_\text{init} = 30$ iterations, followed by refinement to minimize Problem~(\ref{program:MAPsmooth}) using the Adam optimizer~\cite{kingma2014adam} for $N_{\text{iter}} = 1000$ iterations. Initialization is performed by evaluating the restricted negative log-likelihood functions in Algorithms~\ref{alg:greedy} and~\ref{alg:greedy2} at $100$ evenly-spaced values. Because the vertex of a junction can be outside its patch (see Figure~\ref{figure:junctions}), each of its two coordinates is searched over an interval of length $3R$ around the center of each patch. The accuracy of our initialization is thus $3.6^\circ$ in the junction angles, and $0.03R$ in the vertex position.

For the refinement step we use a learning rate of $0.03$ for the vertex positions and $0.003$ for the junction angles, and the global maps $\hat{B}^{(\delta)}(\bx)$ and $\hat{I}(\bx)$ are treated as constants computed using the values of the previous iteration when computing gradients. In order to allow the parameters to first improve their estimates locally and only then use the consistency term to improve the field of junctions, we linearly increase the consistency weights from $0$ to their final values $\lambda_B$ and $\lambda_C$ over the $1000$ refinement iterations. We additionally apply Algorithm~\ref{alg:greedy2} (without reinitializing the junction parameters) once every 50 refinement iterations, which we find helps our method avoid getting trapped in local minima. The runtime of our algorithm on an NVIDIA Tesla V100 GPU is $110$ seconds for a $192 \times 192$ image with patch size $R=21$, but both runtime and space usage can be significantly reduced by only considering every $s$th patch in both spatial dimensions for some constant stride $s$ (see supplement for the effect of $s$ on runtime and performance). We implemented our algorithm in PyTorch, and our code and datasets are available on our project page~\cite{projectpage}.

\section{Experiments} \label{section:experiments}

Once an image is analyzed, its field of junctions provides a distributional representation of  boundary structure and smooth regional appearance. Each pixel in the field provides a ``vote'' for a nearby (sub-pixel) vertex location with associated wedge angles and color values around that location.  Simple pixel-wise averages derived from the field are useful for extracting  contours, corners and junctions, and boundary-aware smoothing. We demonstrate these uses here, and we compare our model's regularization to previous methods for curvature minimization.

We evaluate performance using three types of data. First, we show qualitative results on captured photographs. Second, we quantify repeatability using the Smartphone Image Denoising Dataset (SIDD)~\cite{abdelhamed2018high}, evaluating the consistency of extracted boundaries when the same scene is photographed at decreasing light levels (and thus increasing noise levels). Finally, to precisely quantify the accuracy of extracted contours, corners and junctions, we generate a dataset of $300$ synthetic grayscale images (shown in the supplement) with boundary elements known to sub-pixel precision, and with carefully controlled noise levels. In this section we provide results using uncorrelated noise, and our supplement contains results on images corrupted by other noise models.

\boldstart{Boundary-aware smoothing.} A field of junctions readily provides a boundary-aware smoothing using Equation~\ref{equation:colormapsmooth}. An example for a photograph is shown in Figure~\ref{figure:car}, and a comparison of its resilience to noise with that of~\cite{xu2011image} is shown in Figure~\ref{figure:R}.

\boldstart{Boundary Detection.} A field of junctions also immediately provides a boundary map via Equation~\ref{equation:edgemapsmooth}. Figure~\ref{figure:car} shows the resulting boundaries extracted from a photograph, and Figure~\ref{figure:lux} shows a qualitative comparison of our boundaries to previous edge detection and segmentation methods on a patch extracted from a noisy short-exposure photograph.

We quantitatively compare our results to existing contour and boundary detection methods: gPb~\cite{arbelaez2010contour}, HED~\cite{xie2015holistically}, Ofir \etal~\cite{ofir2016fast}, and gPb and HED when denoised by BM3D~\cite{dabov2007image} supplied with true noise level $\sigma$. (Since \cite{ofir2016fast} is designed for low SNR, so we do not combine it with BM3D.) In Figure~\ref{figure:fmeasures} we show the F-scores of results obtained by each method on our synthetic dataset. The F-score is computed by matching the boundaries output by each detector with the ground truth and taking the harmonic mean of its precision and recall.

\begin{figure}[t]
\begin{center}
    \includegraphics[width=\linewidth]{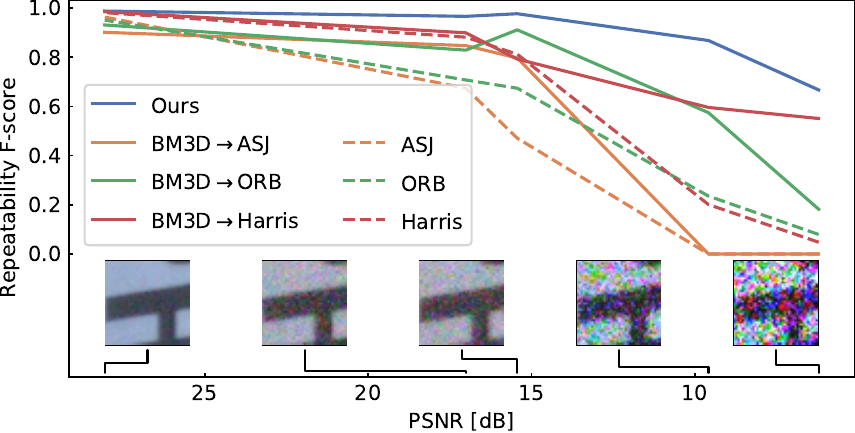}
\end{center}
\vspace*{-5mm}
\caption{Vertex detection repeatability over increasing noise on patches from SIDD, compared to repeatability of other detectors with and without denoising. The number of points detected by each method on the clean ground truth is 126 (ours), 49 (ASJ), 57 (Harris), and 70 (ORB).} \label{figure:junction_repeatability}
\end{figure}

\begin{figure}[t]
\begin{center}
    \includegraphics[width=\linewidth]{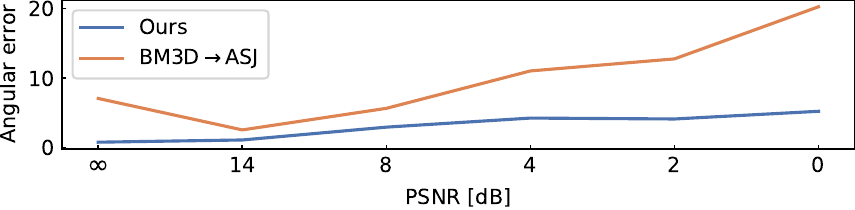}
\end{center}
\vspace*{-5mm}
\caption{Error of angles (in degrees) at detected junctions on our dataset, for our method and ASJ preprocessed by BM3D. Our method degrades slowly. We reported accuracy for ASJ on correctly-detected junctions only.
See Figure~\ref{figure:fmeasures} for sample patches.} \label{figure:junction_angular_error}
\end{figure}

\boldstart{Curvature Regularization.} Figure~\ref{figure:regularization} compares our  boundary regularization to $\ell_1$-elastica~\cite{he2019segmentation} with optimally-tuned parameters. This represents the strongest possible comparison across a large family of existing regularizers, because elastica includes pure-length and pure-curvature minimization as special cases, and because minimizing the $\ell_1$-norm outperforms the $\ell_2$-norm in these images. Unlike existing regularizers, the field of junctions preserves sharp corners; favors linear contours over curved ones; is agnostic to length and convexity of boundaries; and is, as far as we know, the first to do all of this while preserving junctions.

\boldstart{Vertex Detection.} A field of junctions also provides a map of vertex locations that can be used like a traditional corner, junction, or interest point detector. To create a vertex map, we use weighted voting from each junction in the field. The likelihood that a vertex exists at location $\bx$ is:
\begin{equation}
    V(\bx) \propto \sum_{i=1}^N w_i \kappa\left(\bx - \bx_i^{(0)}\right),
\end{equation}
with Gaussian kernel $\kappa(\Delta\bx) = \exp\left(-\frac{\|\Delta\bx\|^2}{2\gamma^2}\right)$ of width $\gamma$, and weights $w_i$ that suppress votes from patches having wedge-angles close to $0^\circ$ or $180^\circ$ (\ie with no unique vertex) and from patches with vertex $\bx^{(0)}$ very far from the patch center. (See supplement for full expression.)

Figure~\ref{figure:R} shows the qualitative results of our vertex detector in the low- and high-noise regime, compared with ASJ~\cite{xue2017anisotropic}. A quantitative study of the robustness of our detector to noise on our synthetic dataset is shown in Figure~\ref{figure:fmeasures}. We again use F-score to compare to ASJ~\cite{xue2017anisotropic} and to BM3D followed by ASJ, and to baseline CNNs that we trained on our dataset specifically for vertex detection. In this experiment, a separate CNN was trained for each PSNR. Figure~\ref{figure:junction_repeatability} shows the repeatability of our vertex detector over different noise levels using patches extracted from SIDD, compared to ASJ~\cite{xue2017anisotropic}, Harris~\cite{harris1988combined}, and ORB~\cite{rublee2011orb}. The repeatability F-scores are computed by comparing the points obtained by each method on the noisy images with its output on the noiseless ground truth images. In all cases we find that our model provides superior resilience to noise. Our detector also provides repeatability over change in viewpoint angle similar to other interest point detectors (see supplement).

In addition to the vertex locations, a field of junctions provides an estimate of the angles of each detected vertex. We treat $\banglesymbol_i$ as an estimate for the angles at a pixel $i$. Figure~\ref{figure:junction_angular_error} shows a comparison of this angle estimation accuracy over multiple noise levels with ASJ preprocessed by BM3D. Because ASJ alone fails at moderate noise levels (see Figure~\ref{figure:fmeasures}), we only plot the results of BM3D followed by ASJ.

\section{Limitations}

The field of junctions is governed by just a few parameters, so compared to deep CNNs it has much less capacity to specialize to non-local patterns of boundary shape and appearance that exist in a particular dataset or imaging modality. Also, as currently designed, it analyzes images at only one scale at a time, with $R$ determining the minimum separation between vertices in the output at that scale. Finally, while the analysis algorithm scales well with image size ($O(N)$, compared to the $O(N^{1.5})$ and $O(N\log N)$ algorithms  of~\cite{ofir2016fast,ofir2019detection}) and has runtime comparable to some other analyzers like gPb, it is slower than feedforward CNNs and dedicated smoothers and contour/corner detectors that are engineered for speed on high-SNR images.

\boldstart{Acknowledgements.} This work is supported by the National Science Foundation under Cooperative Agreement PHY-2019786 (an NSF AI Institute,  \href{http://iaifi.org}{http://iaifi.org}).

{\small
\bibliographystyle{ieee_fullname}
\bibliography{references}
}

\newpage

\end{document}